\newtheorem{theorem}{Theorem}
\newtheorem{proposition}{Proposition}
\newtheorem{definition}{Definition}
\newtheorem{corollary}{Corollary}
\definecolor{mygray}{gray}{.9}
\newcommand{\model}{\textsc{P-JEPA}\xspace}
\newlength{\imgdim}                
\newtcolorbox{takeawaybox}{
  enhanced, breakable,
  colback=blue!4,        
  colframe=blue!55!black,
  boxrule=0.6pt,
  arc=2mm,               
  left=4pt,right=4pt,top=2pt,bottom=2pt,
  before skip=8pt, after skip=8pt
}
\title{Why and How Auxiliary Tasks Improve JEPA Representations}
\author{Jiacan Yu \\
  Johns Hopkins University \\
  \texttt{jyu197@jh.edu} \\\And
  Siyi Chen\\
  Johns Hopkins University \\
  \texttt{schen357@jhu.edu} \\\And
  Mingrui Liu \\
  Northwestern University \\
  \texttt{mingruiliu2025@u.northwestern.edu} \\\And
  Nono Horiuchi \\
  University of Rochester \\ 
  \texttt{nhoriuch@u.rochester.edu} \\\And
  Vladimir Braverman\\
  Johns Hopkins University \\
  \texttt{vova@cs.jhu.edu}\\\And
  Zicheng Xu \\
  Johns Hopkins University \\
  \texttt{zxu161@jh.edu}\\\And
  Dan Haramati \\
  Brown University \\
  \texttt{dan\_haramati@brown.edu} \\\And
  Randall Balestriero\\
  Brown University \\
  \texttt{randall\_balestriero@brown.edu}\\
  }
\begin{document}
\maketitle

\begin{abstract}
Joint-Embedding Predictive Architecture (JEPA) is increasingly used for visual representation learning and as a component in model-based RL, but its behavior remains poorly understood. We provide a theoretical characterization of a simple, practical JEPA variant that has an auxiliary regression head trained jointly with latent dynamics. We prove a \emph{No Unhealthy Representation Collapse} theorem: in deterministic MDPs, if training drives both the latent-transition consistency loss and the auxiliary regression loss to zero, then any pair of non-equivalent observations, i.e., those that do not have the same transition dynamics or auxiliary value, must map to distinct latent representations. Thus, the auxiliary task anchors which distinctions the representation must preserve. Controlled ablations in a counting environment corroborate the theory and show that training the JEPA model jointly with the auxiliary head generates a richer representation than training them separately. Our work indicates a path to improve JEPA encoders: training them with an auxiliary function that, together with the transition dynamics, encodes the right equivalence relations.
\end{abstract}

\section{Introduction}


Joint-Embedding Predictive Architecture (JEPA) has become a go-to recipe for image/video representation learning \citep{Assran_2023_CVPR,bardes2024vjepa} and is increasingly used in model-based Reinforcement Learning (RL) and planning~\citep{hansen2022tdmpc, hansen2024tdmpc2,sobal2025learningrewardfreeofflinedata, zhou2025dinowm,kenneweg2025jepa}. Yet its success is not “out-of-the-box”: practitioners report brittleness and representation collapse unless carefully tuned \citep{garrido2023rankme,thilak2024lidar}. What is missing is a theory that explains \emph{which} knobs matter and \emph{why}.

Previous SSL theories only connect methods to each other \citep{balestriero2022contrastive,vanassel2025jointembeddingvsreconstruction} or provide some guarantees in infinite or nonparametric regime \citep{pmlr-v119-wang20k,pmlr-v202-cabannes23a,10.5555/3540261.3540643}. We provide theoretical statements that hold in realistic finite-data regime with the JEPA loss being used in practice. We consider a minimal, practical variant where a JEPA model and an auxiliary neural network (Fig.~\ref{fig:method}) learn consistent latent dynamics and fit a function of observations, i.e. auxiliary function. The key message is simple: the auxiliary task is not a heuristic—it determines the information the representation must preserve. We formalize this via a \emph{No Unhealthy Representation Collapse} theorem (Thm.~\ref{thm:main result}): in deterministic MDPs, if the dynamics-consistency and auxiliary losses reach zero, then any two observations that have different transition dynamics or auxiliary values receive different latent representations. Hence the auxiliary choice controls the type of information encoded in the representation space. This offers a practical lever: improve JEPA encoders \emph{via the auxiliary} rather than ad-hoc architecture tweaks.

We conduct experiments in a counting environment (Sec.~\ref{sec:results}), where the observations are images containing different numbers of objects and actions are adding or removing an object. We find that the learned latent space forms distinct clusters for observations containing different numbers of objects, matching the theory's prediction that there will be one non-collapsible class per object count. Decoders trained without backpropagating into the encoder cannot recover shape, color, or position, showing the encoder's strong capability of abstraction. Our results show that the auxiliary task guides the encoder to distinguish non-equivalent observations. Therefore, JEPA encoders can be improved by choosing auxiliary tasks that, when combined with the transition dynamics, encode helpful equivalence relations.

\section{Theoretical Characterization of JEPA with Auxiliary Tasks}

This section first introduces the model and training objective we consider (Sec.~\ref{subsec: setup}), followed by a theoretical characterization that formalizes the notion of non-equivalent observations and establishes our main theorem that shows non-equivalent observations will not be collapsed (Thm.~\ref{thm:main result}). We then present experiments in a counting environment, validating the theory through clustering analysis and visualization of the learned latent space.

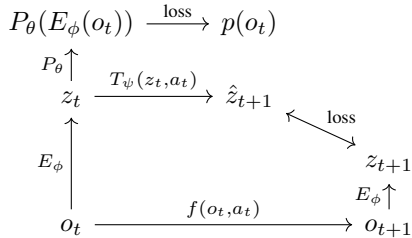
\begin{wrapfigure}{r}{0.41\textwidth}
\vspace{-17pt}
\centering
\begin{tikzcd}[row sep=1em, column sep=2.4em]
  P_\theta(E_\phi(o_t))
    \arrow[r,->,"{\text{loss}}"]
  & p(o_t)
  \\
  z_t
    \arrow[u,"P_\theta"]
    \arrow[r,"{T_\psi(z_t,a_t)}"]
  & \hat{z}_{t+1}
    \arrow[dr,<->,"{\text{loss}}"]
  \\
  {}
  & {}
  & z_{t+1}
  \\
  o_t
    \arrow[uu,"E_\phi"]
    \arrow[rr,"{f(o_t, a_t)}"]
  &
  &
    o_{t+1}
      \arrow[u,"E_\phi"]
\end{tikzcd}
  \caption{Architecture of \model{}. The pentagon is the JEPA core: $E_\phi$ is the encoder; $T_\psi$ is the latent transition model. $P_\theta(z_t)$ regresses to an auxiliary function of observations $p$. $p$ can be the reward $r$ or a randomly initialized neural network; see Sec.~\ref{sec:knowledge-and-mbrl}. $E_\phi$ is updated by both the dynamics loss and the auxiliary loss; no target/EMA \citep{NEURIPS2020_f3ada80d} encoder is used.}
  \label{fig:method}
  \vspace{-24pt}
\end{wrapfigure}

\subsection{Setup}\label{subsec: setup}
Consider a deterministic Markov decision process (MDP) $\mathcal{M} = (\mathcal{O}, \mathcal{A}, \mu, f, r)$ \citep{10.5555/528623}, where $\mathcal{O}$ is the observation space (finite in practice due to digital discretization), $\mathcal{A}$ is the action space, $\mu \in \mathcal{P}(\mathcal{O})$ is the initial observation distribution, $f: \mathcal{O} \times \mathcal{A} \to \mathcal{O}$ is the transition dynamics, and $r: \mathcal{O} \to \mathbb{R}$ is the reward function.

Consider a JEPA model with an auxiliary network on top of the encoder, as shown in Fig. \ref{fig:method}: a neural network $P_\theta$ regresses to an auxiliary function $p$, and there is no stop gradient in JEPA's latent dynamics loss. $P_\theta$, $E_\phi$, and $T_\psi$ are trained jointly by minimizing the latent transition loss and the auxiliary loss: $\mathcal{L}(\theta, \phi, \psi)=\mathcal{L}_{dyn}+c_p\mathcal{L}_p$, where $c_p$ is a hyperparameter controlling the weight of the auxiliary loss. The latent transition loss is: $\mathbb E_{(o_t,a_t,o_{t+1})}||T_{\psi}\bigl(E_{\phi}(o_t),a_t\bigr)-E_{\phi}(o_{t+1})||^2$. The auxiliary loss $\mathcal{L}_p$ is a loss that measures the difference between the output of $P_\theta(E_\phi(o))$ and $p$.

\subsection{Theory}

In the RL literature, equivalence between observations is captured by bisimulation, i.e., having the same reward and transition dynamics \citep{GIVAN2003163,zhang2021invariant}, but an MDP may admit many bisimulations. We consider the largest one, which includes all equivalent pairs of observations, and replace the reward with an auxiliary function. For proof clarity, we adopt an apartness-based definition (Def.~\ref{def:most contracting bisim}): define a monotone operator whose least fixed point collects pairs distinguishable now or after some time steps; its complement is the largest bisimulation \citep{lmcs:6078}. We then show that \model cannot collapse non-equivalent observations, since it must fit the auxiliary function and maintain consistent latent dynamics.

\begin{definition}[Largest bisimulation]\label{def:most contracting bisim}
Let $\mathcal M$ be a deterministic MDP, $p$ be a function of observations, and $R\subseteq\mathcal O^2$ be a relationship in the observation space. Define the operator $
\mathcal F(R)
:=\{(o,o')\in\mathcal O^2:p(o)\neq p(o')\}
\cup
\{(o,o')\in\mathcal O^2:\exists a\in\mathcal A\text{ with }(f(o,a),f(o',a))\in R\}
$.
Start from $R^{(0)}=\varnothing$, and iterate $R^{(t+1)}=\mathcal F(R^{(t)})$. This process collects pairs of observations distinguishable immediately or after the same sequence of actions. Because $\mathcal O^2$ is finite in practice, $\mathcal{F}(R)$ stabilizes after finitely many steps at $R^\star=\mathcal F(R^\star)$.  $R^\star$ is the least fixed point of $\mathcal F$. Define $B^\star := (\mathcal O\times\mathcal O)\setminus R^\star$. We call $B^\star$ the \emph{largest bisimulation} over $\mathcal{M}$ and $p$.
\end{definition}

\begin{theorem}[No Unhealthy Representation Collapse]\label{thm:main result}
Let $\mathcal{M}$ be a deterministic MDP, $p$ be a function of observations, and a \model model be well-trained: $ T_\psi(E_\phi(o),a)=E_\phi\bigl(f(o,a)\bigr)$ and $ P_\theta(E_\phi(o)) = p(o)$ for all $o$ and $a$. Then any pair of observations that is not in the largest bisimulation over $\mathcal{M}$ and $p$ does not collapse: $o_i \not\equiv_{B^\star} o_j\Longrightarrow E_\phi(o_i)\neq E_\phi(o_j)$.
\end{theorem}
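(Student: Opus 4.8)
The plan is to prove the contrapositive in the form it is stated: assuming the two observations are \emph{not} equivalent under $B^\star$, I show their encodings differ. Since $B^\star = \mathcal{O}^2 \setminus R^\star$, the hypothesis $o_i \not\equiv_{B^\star} o_j$ means exactly that $(o_i, o_j) \in R^\star$. The natural approach is induction on the stage $t$ at which the pair first enters the least fixed point, i.e. on the smallest $t$ with $(o_i, o_j) \in R^{(t)}$. Because $R^{(0)} = \varnothing$, every pair in $R^\star$ enters at some finite stage $t \geq 1$, so this induction is well-founded and covers all non-equivalent pairs.

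For the base case, suppose $(o_i, o_j)$ first enters at stage $1$, meaning $(o_i, o_j) \in \mathcal{F}(\varnothing)$. The second clause of $\mathcal{F}$ is vacuous when $R = \varnothing$, so this forces $p(o_i) \neq p(o_j)$. Here I invoke the well-trained hypothesis $P_\theta(E_\phi(o)) = p(o)$: if we had $E_\phi(o_i) = E_\phi(o_j)$, then applying $P_\theta$ to both sides would give $p(o_i) = P_\theta(E_\phi(o_i)) = P_\theta(E_\phi(o_j)) = p(o_j)$, contradicting $p(o_i) \neq p(o_j)$. Hence $E_\phi(o_i) \neq E_\phi(o_j)$.

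For the inductive step, suppose the claim holds for all pairs entering at stages $\le t$, and let $(o_i, o_j)$ first enter at stage $t+1$. Then $(o_i, o_j) \in \mathcal{F}(R^{(t)})$. Either the first clause applies, giving $p(o_i) \neq p(o_j)$ and we finish exactly as in the base case; or the second clause applies, giving an action $a$ with $(f(o_i, a), f(o_j, a)) \in R^{(t)}$. By the induction hypothesis applied to this successor pair, $E_\phi(f(o_i, a)) \neq E_\phi(f(o_j, a))$. Now I push this apartness back through the latent transition: by the well-trained consistency $T_\psi(E_\phi(o), a) = E_\phi(f(o, a))$, we have $T_\psi(E_\phi(o_i), a) = E_\phi(f(o_i, a)) \neq E_\phi(f(o_j, a)) = T_\psi(E_\phi(o_j), a)$. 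Since $T_\psi(\cdot, a)$ is a well-defined function, equal inputs would force equal outputs; the outputs differ, so the inputs must differ, i.e. $E_\phi(o_i) \neq E_\phi(o_j)$.

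The argument is essentially a clean propagation of apartness: the auxiliary head creates distinctness at the base level, and the functionality (single-valuedness) of $T_\psi(\cdot, a)$ lets distinctness at successors pull back to distinctness at predecessors. I expect the only genuinely delicate point to be setting up the induction on "first entry stage" so that it is clearly well-founded and so that the induction hypothesis is available for the successor pair — one must confirm that if $(o_i,o_j)$ first enters at stage $t+1$ then the witnessing successor pair genuinely lies in $R^{(t)}$ (which is immediate from the definition of $\mathcal{F}$ and monotonicity of the iterates), rather than only in some later stage. Everything else is a direct contrapositive use of the two well-trained equalities, requiring no structure on the latent space beyond that $E_\phi$, $P_\theta$, and each $T_\psi(\cdot,a)$ are functions.
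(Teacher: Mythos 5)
Your proof is correct, and it is worth noting how it relates to the paper's arguments. The paper's proof of this theorem splits upfront into Case A ($p(o_1)\neq p(o_2)$) and Case B (same $p$-value but different transition behavior); in Case B it asserts, ``by the construction of $R^\star$,'' that there is a minimal $k\ge 1$ and an action sequence $a_1,\dots,a_k$ after which the $p$-values disagree, and then runs a \emph{backward} induction along that specific action chain. Your proof instead inducts \emph{forward} on the first stage $t$ at which the pair enters the iterates $R^{(t)}$ of $\mathcal F$, handling both clauses of $\mathcal F$ uniformly in the inductive step. The two arguments use identical core ingredients — $P_\theta$ forces apartness at $p$-disagreements, and single-valuedness of $T_\psi(\cdot,a)$ pulls apartness at successors back to predecessors — but your decomposition is tied directly to the least-fixed-point definition, so its well-foundedness is immediate ($R^{(0)}=\varnothing$ plus monotonicity of $\mathcal F$), whereas the paper's unrolling step (existence of the witnessing action sequence) is itself a structural claim about $R^\star$ whose rigorous justification would essentially require the stage induction you perform. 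In that sense your version is slightly more self-contained. Interestingly, your argument coincides almost verbatim with the proof the paper gives for its finite-data variant (Thm.~\ref{thm:empirical} in Appx.~\ref{sec:finite data}), which also inducts on the stage of the ascending sequence $R^{(t)}$ and treats the successor-disagreement clause exactly as you do; so your route is the one the authors themselves adopt when they want the cleaner formulation.
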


Proof in Appx.~\ref{sec:spc proof}. The finite data version of this theorem is in Appx.~\ref{sec:finite data}.

\subsection{How Auxiliary Tasks Affect Learned Representation}
\label{sec:results}


We design a counting environment with $64{\times}64$ RGB observations containing $k\in\{0,\dots,8\}$ objects. When $k=0$, an observation is a completely dark image. At episode start, a shape (triangle/disk/square/bar) and color are sampled and fixed. Example observations are shown in the third column of Fig.~\ref{fig:main}. The actions are increasing or decreasing $k$ by one. Positions of objects are resampled each step. Reward is $1$ iff the count equals a fixed $n$, else $0$. We train our \model model on a dataset collected by a random policy. We set $n=4$ in our experiment. The auxiliary task is regressing to the reward.

\begin{figure}[ht]
  \centering
  \includegraphics[width=\linewidth]{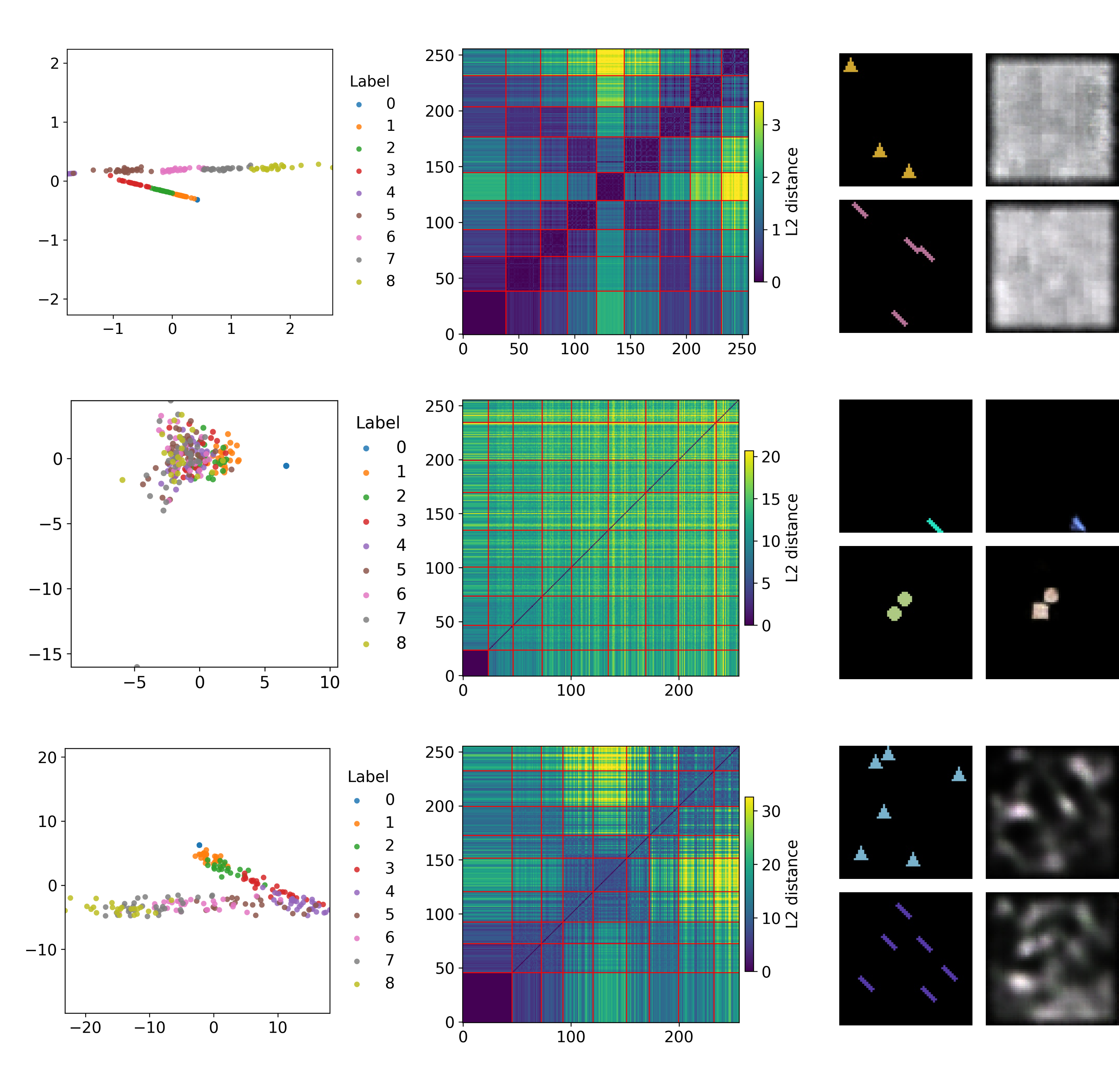}
\caption{\textbf{Each row}: \emph{left}—PCA of embeddings of 256 randomly chosen observations; different colors correspond to different object counts; \emph{middle}— pairwise $\ell_2$ distances between the same 256 embeddings, with samples sorted by object count and red grid lines marking count boundaries; \emph{right}—example observations (left of each pair) and decoder outputs (right, normalized for better contrast). 
\textbf{Top row}: \model with reward auxiliary: PCA shows nine clear clusters; the diagonal blocks in the heatmap are darker than off-diagonal blocks, indicating separation according to object count; reconstructions discard shape/color/position.
\textbf{Middle row}: \model with 256-dimensional random auxiliary: no count structure; in the heatmap, the diagonal blocks are as bright as off-diagonal blocks, showing distances within the same object count are comparable to those across different counts; decoder recovers position and partial color/shape information.
\textbf{Bottom row}: Encoder receives gradients only from reward loss: representation space shows only coarse separation; the heatmap exhibits only coarse block structure, roughly grouping counts into three sets (0–2, 3–5, 6–8); decoder cannot recover color/shape/position information.}
  \label{fig:main}
\end{figure}

One may think the JEPA model will collapse the representation to two clusters, because the reward only has two different values. However, according to Def.~\ref{def:most contracting bisim}, there are $9$ non-bisimilar sets, one per object count. Therefore, Thm.~\ref{thm:main result} predicts that the observations will be mapped to at least 9 distinct representations (proof in Appx.~\ref{sec:explain-collapse}). As observed in the top row of Fig.~\ref{fig:main}, Principal Component Analysis (PCA) \citep{abdi2010principal} visualization of 256 encoding vectors indeed shows nine clusters; pairwise distances within counts are smaller than those across counts, indicating separation of clusters; a decoder trained in parallel \emph{without} backpropagating into the encoder fails to reconstruct shape, color, and positions, indicating that the model is able to abstract away redundant information.

During training, we observe that the encoder sometimes diffuses compact clusters into large blobs, but this does not contradict our theory. Our guarantees are \emph{one-sided}: under perfect training, pairs that are \emph{not} equivalent in the largest bisimulation cannot collapse, but the theory does \emph{not} require bisimilar observations to merge to a single encoding vector. Future studies can look into ways to stabilize collapse within bisimilar classes. In this work, all plots are obtained when the clusters are the most compact. The compactness is measured by nearest-centroid classification accuracy, where each centroid is computed from embeddings with a given object count, and an observation is classified correctly if its true count matches that of its nearest centroid.

We then set the auxiliary function to a fixed random 256-D linear mapping. This makes almost all pairs of observations non-bisimilar, which should prevent most representation collapse. Indeed, as observed from the middle row of Fig.~\ref{fig:main}, the heatmap shows that embeddings are separated, though not organized by count. The decoder is able to recover the position information and part of the color and shape information, indicating that the encoder partially preserves these factors rather than collapsing them.

Consider the two training losses separately: reward loss alone yields only coarse separation, as observed from the bottom row of Fig.~\ref{fig:main}; latent transition loss alone leads to complete collapse into a single compact cluster \citep{drozdov2024videorepresentationlearningjepa}, whereas combining them in \model produces nine separated clusters, showing that our model learns a richer representation.

\section{Conclusion}
\label{sec:knowledge-and-mbrl}

\paragraph{A Knowledge Discovery View of JEPA+Auxiliary Tasks.} A possible interpretation is that the model is trained to discover knowledge from the environment. The learned piece of knowledge is the triple $
\mathcal K := \big(E_\phi,\; T_\psi(z,a),\; P_\theta(z)\big)$
that explains a user-specified phenomenon. Here
(i) $E_\phi$ abstracts observations into representations,
(ii) $T_\psi$ enforces transition consistency in latent space, and
(iii) $P_\theta$ predicts the phenomenon value.
Knowledge discovery requires a dataset of observed transitions and phenomenon values $\mathcal D=\{(o,a, f(o, a), p(o))\}$. The objective of knowledge discovery decomposes into two parts: (1) make $E_\phi,T_\psi$ consistent with dynamics; (2) fit $P_\theta(E_\phi(o))$ to the phenomenon. Crucially, the loss does not require maximization of the phenomenon function during training; actions can be random. Control can be done \emph{after} learning $\mathcal K$ by planning \citep{pmlr-v155-pinneri21a} over the learned dynamics using model predictive control. Our focus in this work is characterizing what knowledge can be learned given an environment and a phenomenon function. In vanilla JEPA, the task is “explaining nothing”. The knowledge that explains nothing is only required to be consistent, and the easiest way for $\mathcal K$ to be consistent is complete representation collapse.

\paragraph{Improving JEPA encoders.} Our theory suggests a way to improve JEPA encoders: introduce an auxiliary function that represents the phenomenon that the representation should explain. The auxiliary function and the transition dynamics define an equivalence relation over observations (Def.~\ref{def:most contracting bisim}), guiding the encoder to collapse only within equivalence classes while preserving distinctions across non-equivalent ones. Thus, the encoder can discard irrelevant variation while preserving distinctions that matter for the task. In RL, natural choices of the auxiliary function include the reward or Q-function, as used in TD-MPC2 \citep{hansen2024tdmpc2}. Our results thus provide theoretical grounding for why such designs are effective.

\paragraph{Conclusion.} We gave a simple, actionable characterization of JEPA with an auxiliary regression head: the auxiliary target anchors which distinctions the encoder must preserve. Formally, under perfect training in deterministic MDPs, non–bisimilar observations cannot map to the same encoding vector. Experiments in a counting environment match these predictions and show that redundant factors are discarded. Practically, one can improve JEPA encoders by choosing the auxiliary to match the phenomenon of interest (e.g., reward/$Q$), clarifying why TD-MPC2-style designs are effective.


\bibliographystyle{unsrtnat}
\bibliography{main}

\newpage

\appendix
\section*{Appendix}
\section{Related Work}
\label{sec:related}

Classical works on bisimulation formalize that when two states are behaviorally indistinguishable: having identical rewards and transition dynamics, they can be considered as the same and task-irrelevant variation can be discarded \citep{10.1007/BFb0017309,GIVAN2003163}. Building on this principle, DeepMDP links bisimulation to model-based reinforcement learning methods by adding reward and transition prediction as auxiliary objectives \citep{gelada2019deepmdp}. Our model is trained on just these two objectives and our theory reaches a complementary conclusion: pairs of observations that are not in the largest bisimulation cannot be mapped to the same representation. Zhang et al. propose to learn only task-relevant information by shaping latent distances to match bisimulation distances \citep{zhang2021learning}. We operationalize the removal of irrelevant information via neural-collapse \cite{doi:10.1073/pnas.2015509117,kothapalli2023neural} under JEPA training rather than directly optimizing a bisimulation loss.

TD-MPC2 \citep{hansen2024tdmpc2} implements the training of a JEPA model in Reinforcement Learning (RL) environments with additional policy, reward, and Q networks on top of the JEPA model. They use a stop gradient in the latent transition loss of JEPA. To understand the representation learned by a JEPA-style model, we experiment with a simplified version of TD-MPC2, which we call \model. Our implementation is based on the code of TD-MPC2 and is available at \url{https://github.com/jasonyu48/concept_discovery}.

PLDM (Planning with Latent Dynamics Models) \citep{sobal2025learningrewardfreeofflinedata} studies learning from reward-free offline trajectories by first training a JEPA model and then performing planning in the learned latent space, thus explicitly separating representation/knowledge discovery from control. They demonstrate their method is data efficient and powerful in generalizing to unseen layouts, supporting a workflow in which discovery of environment regularities precedes task-specific control.

\citet{kondapaneni2024number} also study the structure of learned representations in a similar counting environment, but under a different setup. In particular, they do not have a latent dynamics model, and their task is to predict what action was done based on representations from the previous and the current time step. Despite these differences in task and architecture, they likewise observe clusters corresponding to object counts in the representation space, indicating that this phenomenon arises broadly across settings.

\section{Proof}
\label{sec:spc proof}

In RL, equivalence of observations is captured by bisimulation:

\begin{definition}[Bisimulation for deterministic MDP]\label{def:bisimulation}
Given a deterministic MDP $\mathcal{M}$, an equivalence relation $B$ between observations is a \emph{bisimulation relation} if, for all observations $o_i, o_j \in \mathcal{O}$ that are equivalent under $B$ (denoted $o_i \equiv_B o_j$) the following conditions hold (i) $r(o_i) = r(o_j)$ and (ii) $f(o_i, a) \equiv_B f(o_j, a) \; \forall a \in \mathcal{A}$ \citep{10.1007/BFb0017309,GIVAN2003163,zhang2021invariant}.
\end{definition}

We replace the reward function by the auxiliary function, and consider the relationship that contains all equivalent pairs of observations:

\begin{definition}[Largest bisimulation]
Let $\mathcal M$ be a deterministic MDP, $p$ be a function of observations, and $R\subseteq\mathcal O^2$ be a relationship in the observation space. Define the operator $
\mathcal F(R)
:=\{(o,o')\in\mathcal O^2:p(o)\neq p(o')\}
\cup
\{(o,o')\in\mathcal O^2:\exists a\in\mathcal A\text{ with }(f(o,a),f(o',a))\in R\}
$.
Start from $R^{(0)}=\varnothing$, and iterate $R^{(t+1)}=\mathcal F(R^{(t)})$. This process collects pairs of observations distinguishable immediately or after the same sequence of actions. Because $\mathcal O^2$ is finite in practice, $\mathcal{F}(R)$ stabilizes after finitely many steps at $R^\star=\mathcal F(R^\star)$.  $R^\star$ is the least fixed point of $\mathcal F$. Define $B^\star := (\mathcal O\times\mathcal O)\setminus R^\star$. We call $B^\star$ the \emph{largest bisimulation} over $\mathcal{M}$ and $p$.
\end{definition}

Then we show that a well-trained \model model cannot collapse non-equivalent observations.

\begin{theorem}[No Unhealthy Representation Collapse]
Let $\mathcal{M}$ be a deterministic MDP, and the \model model be well-trained:
\[ T_\psi(E_\phi(o),a)=E_\phi\bigl(f(o,a)\bigr) \qquad o\in\mathcal{O},\ a\in\mathcal{A},\]
\[ P_\theta(E_\phi(o)) = p(o), \qquad \forall o\in\mathcal{O}. \]
Then any pair of observations that is not bisimilar in the largest bisimulation over $\mathcal{M}$ and $p$ does not collapse:
\[
o_i \not\equiv_{B^\star} o_j \;\Longrightarrow\; E_\phi(o_i)\neq E_\phi(o_j),
\]
where $B^\star$ is the largest bisimulation relation over $\mathcal{M}$.
\end{theorem}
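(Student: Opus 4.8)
The plan is to prove the statement by induction on the fixed-point iteration that defines $R^\star$, exploiting the fact that $T_\psi$ and $P_\theta$ are genuine (single-valued) functions. First I would rewrite the hypothesis: by definition of $B^\star=(\mathcal O\times\mathcal O)\setminus R^\star$, the condition $o_i\not\equiv_{B^\star}o_j$ is exactly $(o_i,o_j)\in R^\star$. Since $\mathcal F$ is monotone and $\mathcal O^2$ is finite, the iteration $R^{(0)}=\varnothing$, $R^{(t+1)}=\mathcal F(R^{(t)})$ is increasing and reaches $R^\star=\bigcup_{t}R^{(t)}$ after finitely many steps. Hence it suffices to show, by induction on $t$, that $(o,o')\in R^{(t)}$ implies $E_\phi(o)\neq E_\phi(o')$; taking $t$ large enough then covers all of $R^\star$ and yields the theorem.

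The base case $t=0$ is vacuous since $R^{(0)}=\varnothing$. For the inductive step, I would assume the claim for $R^{(t)}$ and take $(o,o')\in R^{(t+1)}=\mathcal F(R^{(t)})$, splitting along the definition of $\mathcal F$. In the first case $p(o)\neq p(o')$: here I use that $P_\theta$ is a function together with the auxiliary-fit hypothesis $P_\theta(E_\phi(\cdot))=p(\cdot)$, so if $E_\phi(o)=E_\phi(o')$ then $p(o)=P_\theta(E_\phi(o))=P_\theta(E_\phi(o'))=p(o')$, a contradiction, forcing $E_\phi(o)\neq E_\phi(o')$. In the second case there is an action $a$ with $(f(o,a),f(o',a))\in R^{(t)}$; the inductive hypothesis gives $E_\phi(f(o,a))\neq E_\phi(f(o',a))$, and the transition-consistency hypothesis rewrites these as $T_\psi(E_\phi(o),a)\neq T_\psi(E_\phi(o'),a)$. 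Since $T_\psi(\cdot,a)$ is a function, equal inputs would force equal outputs, so $E_\phi(o)=E_\phi(o')$ is impossible and again $E_\phi(o)\neq E_\phi(o')$. This closes the induction.

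The whole argument rests on a single mechanism applied twice: a function cannot send distinct outputs back to a shared input, so the trained heads $P_\theta$ and $T_\psi$ \emph{transport} any established distinction backward through one step of latent dynamics. I therefore do not anticipate a genuine analytic obstacle; the only point needing care is the bookkeeping of the least-fixed-point construction—confirming that the increasing union $\bigcup_t R^{(t)}$ really equals $R^\star$ (guaranteed by monotonicity of $\mathcal F$ and finiteness of $\mathcal O^2$), so that the induction on $t$ exhausts precisely the non-bisimilar pairs. I would also state explicitly that the conclusion is one-directional: it guarantees non-equivalent pairs stay separated but asserts nothing about bisimilar pairs being merged, consistent with the paper's remark that the guarantee is one-sided.
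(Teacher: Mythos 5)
Your proof is correct, and it is organized differently from the paper's proof of this theorem. The paper splits at top level on whether $p(o_1)\neq p(o_2)$ (Case A) or not (Case B); in Case B it \emph{unrolls} the membership $(o_1,o_2)\in R^\star$ into an explicit minimal-length action sequence $a_1,\dots,a_k$ along which the auxiliary values first disagree, and then runs a \emph{backward} induction along that trajectory, from the disagreeing endpoint at step $k$ down to step $0$. You instead run a \emph{forward} induction directly on the stage index $t$ of the fixed-point iteration $R^{(0)}=\varnothing$, $R^{(t+1)}=\mathcal F(R^{(t)})$, with the case split on the two clauses of $\mathcal F$ appearing inside the inductive step. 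The two arguments use exactly the same atomic mechanisms (single-valuedness of $P_\theta$ to separate pairs with differing $p$, single-valuedness of $T_\psi(\cdot,a)$ to transport a separation one step backward through the dynamics), but your scaffolding is tighter: the paper's assertion in Case B that a minimal $k$ and a witnessing action sequence exist ``by the construction of $R^\star$'' is itself a claim that needs an induction on entry stage, which your proof makes explicit rather than assuming, so you avoid a small gap the paper leaves. Notably, your structure coincides with the proof the paper gives for the finite-data variant (Thm.~\ref{thm:empirical} in Appx.~\ref{sec:finite data}), which also inducts on the iteration stages of the empirical operator; in that sense you have given the argument that unifies both statements. Your closing remarks--that $R^\star=\bigcup_t R^{(t)}$ follows from monotonicity of $\mathcal F$ plus finiteness of $\mathcal O^2$, and that the guarantee is one-sided--are both accurate and consistent with the paper.
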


\begin{proof}
Let $o_1,o_2\in\mathcal{O}$ with $o_1 \not\equiv_{B^\star} o_2$. By definition, this means that \emph{either}
\begin{enumerate}
\item[A.] $p(o_1)\neq p(o_2)$, \emph{or}
\item[B.] $p(o_1)=p(o_2)$ but their transition behaviors differ: there exists $a\in\mathcal{A}$ such that $(f(o_1,a),f(o_2,a))\notin B^\star$.
\end{enumerate}

\textbf{Case A.}  We argue by contradiction. If $E_\phi(o_1) = E_\phi(o_2)$, then $p(o_1)=P_\theta(E_\phi(o_1))=P_\theta(E_\phi(o_2))=p(o_2)$, contradicting with $p(o_1)\neq p(o_2)$. Therefore, $E_\phi(o_1) \neq E_\phi(o_2)$.

\textbf{Case B.} Suppose $p(o_{1})=p(o_{2})$ but $o_{1}\not\equiv_{B^\star}o_{2}$. Since $B^\star$ is the largest bisimulation, $(o_{1},o_{2})\notin B^\star$ implies $(o_{1},o_{2})\in R^\star$, where $R^\star$ is the least fixed point of $\mathcal F$ used to construct $B$. By the construction of $R^\star$ there exists a minimal $k\ge1$ and actions $a_{1},\dots,a_{k}$ such that, writing
\[
o_{1}^{(0)}=o_{1},\quad o_{2}^{(0)}=o_{2},\qquad 
o_{i}^{(t+1)}=f\bigl(o_{i}^{(t)},a_{t+1}\bigr)\ (t=0,\dots,k-1),
\]
we have $p(o_{1}^{(t)})=p(o_{2}^{(t)})$ for all $t<k$ and $p(o_{1}^{(k)})\neq p(o_{2}^{(k)})$. For each $t<k$, well‑trainedness gives
$T_\psi(E_\phi(o_{1}^{(t)}),a_{t+1})=E_\phi(o_{1}^{(t+1)})$. We prove by backward induction on $t=k,k-1,\dots,0$ that $E_\phi(o_{1}^{(t)})\neq E_\phi(o_{2}^{(t)})$. 

Base ($t=k$):

$p(o_{1}^{(k)})\neq p(o_{2}^{(k)})$, we can apply the same argument as Case A. to get $E_\phi(o_{1}^{(k)})\neq E_\phi(o_{2}^{(k)})$.

Inductive step:

Assume $E_\phi(o_{1}^{(t+1)})\neq E_\phi(o_{2}^{(t+1)})$ for some $t<k$ yet $E_\phi(o_{1}^{(t)})=E_\phi(o_{2}^{(t)})$. Then
\[
E_\phi(o_{1}^{(t+1)})=T_\psi\bigl(E_\phi(o_{1}^{(t)}),a_{t+1}\bigr)
                      =T_\psi\bigl(E_\phi(o_{2}^{(t)}),a_{t+1}\bigr)
                      =E_\phi(o_{2}^{(t+1)}),
\]
contradicting the inductive hypothesis. Hence $E_\phi(o_{1}^{(t)})\neq E_\phi(o_{2}^{(t)})$. In particular $E_\phi(o_{1})\neq E_\phi(o_{2})$, completing Case B.

\end{proof}

\subsection{Finite data regime}
\label{sec:finite data}

We analyze the finite data regime in which we do not have access to the ground truth transition or auxiliary function, but only observe a dataset of transitions and auxiliary function values
\[
\mathcal D \;\subseteq\; \mathcal O\times\mathcal A\times\mathcal O\times P,
\qquad (o,a,f(o,a),p(o))\in\mathcal D,
\]where $P$ is the codomain of the auxiliary function.
We assume deterministic transitions: given $o$ and $a$, there can be only one $f(o,a)$. Let
\[
\mathcal O_{\!D}
\;:=\;
\Bigl\{\,o\in\mathcal O \;\Big|\; \exists (o,a,f(o,a),p(o))\in\mathcal D \Bigr\}
\]
be the set of observations that appear in $\mathcal D$ as sources.

Define the set of \emph{co-observed actions}
\[
\mathcal A_{\cap}(x,y) \;:=\; \Bigl\{\,a\in\mathcal A \;\Big|\; (x,a,f(x,a),p(x))\in\mathcal D \ \text{and}\ (y,a,f(y,a),p(y))\in\mathcal D \Bigr\}.
\]

\begin{definition}[Empirical largest bisimulation]
\label{def:empirical-mcb}
Define an operator $\mathcal F_{\!D}$ on $R\subseteq\mathcal O_{\!D}^2$ by
\begin{align*}
\mathcal F_{\!D}(R)
:= & \{(x,y)\in\mathcal O_{\!D}^2 : p(x)\neq p(y)\}\\
& \cup \Bigl\{(x,y)\in\mathcal O_{\!D}^2 : \exists a\in\mathcal A_{\cap}(x,y)\ \text{s.t.}\ \bigl(f(x,a),f(y,a)\bigr)\in R\Bigr\}.
\end{align*}
Start from $R^{(0)}=\varnothing$ and iterate $R^{(t+1)}=\mathcal F_{\!D}(R^{(t)})$.
Because $\mathcal O_{\!D}$ is finite and $\mathcal F_{\!D}$ is monotone, the chain stabilizes after finitely many steps at the least fixed point $R_{\!D}^{\star}$ with $R_{\!D}^{\star}=\mathcal F_{\!D}(R_{\!D}^{\star})$.
Set
\[
B_{\!D}^{\star} \;:=\; (\mathcal O_{\!D}\times\mathcal O_{\!D})\setminus R_{\!D}^{\star}.
\]
We call $B_{\!D}^{\star}$ the \emph{empirical largest bisimulation} over $\mathcal{D}$.
\end{definition}

We say $(E_\phi,T_\psi,P_\theta)$ \emph{is well-trained on $\mathcal D$} if
\begin{equation}
\label{eq:perfect-fit}
\forall (o,a,f(o,a),p(o))\in\mathcal D:\qquad
T_\psi\bigl(E_\phi(o),a\bigr)=E_\phi\bigl(f(o,a)\bigr)
\quad\text{and}\quad
P_\theta\bigl(E_\phi(o)\bigr)=p(o).
\end{equation}

\begin{theorem}[Empirical No Unhealthy Representation Collapse]
\label{thm:empirical}
Suppose $(E_\phi,T_\psi,P_\theta)$ is well-trained on $\mathcal D$. Then for all $o_i,o_j\in\mathcal O_{\!D}$,
\[
(o_i,o_j)\notin B_{\!D}^{\star} \quad\Longrightarrow\quad E_\phi(o_i)\neq E_\phi(o_j).
\]
Equivalently, every pair that the data already certifies as \emph{empirically non-bisimilar} (i.e., in $R_{\!D}^{\star}$) cannot collapse under $E_\phi$.
\end{theorem}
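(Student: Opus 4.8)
The plan is to exploit the inductive structure of the least fixed point $R_{\!D}^{\star}$ directly, replacing the backward path-induction used for the full-data result (Thm.~\ref{thm:main result}) with a forward induction on the stage at which a pair first enters the fixed point. Writing $R^{(0)}=\varnothing$ and $R^{(t+1)}=\mathcal F_{\!D}(R^{(t)})$, monotonicity and finiteness give $R_{\!D}^{\star}=\bigcup_t R^{(t)}$. Since the hypothesis $(o_i,o_j)\notin B_{\!D}^{\star}$ is by definition equivalent to $(o_i,o_j)\in R_{\!D}^{\star}$, it suffices to prove the strengthened claim: \emph{for every $t$ and every $(x,y)\in R^{(t)}$, we have $E_\phi(x)\neq E_\phi(y)$.} I would prove this by induction on $t$.

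For the base case $t=1$, a pair can enter $R^{(1)}=\mathcal F_{\!D}(\varnothing)$ only through the first clause of $\mathcal F_{\!D}$, since the propagation clause requires a predecessor pair in $R^{(0)}=\varnothing$. Hence $p(x)\neq p(y)$, and well-trainedness on $\mathcal D$ gives $P_\theta(E_\phi(x))=p(x)\neq p(y)=P_\theta(E_\phi(y))$; because $P_\theta$ is single-valued, $E_\phi(x)\neq E_\phi(y)$. This is verbatim the argument of Case A in the proof of Thm.~\ref{thm:main result}. For the inductive step, assume the claim on $R^{(t)}$ and take $(x,y)\in R^{(t+1)}$. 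If $(x,y)$ entered via the first clause, the base-case argument applies. Otherwise there is an action $a\in\mathcal A_{\cap}(x,y)$ with $(f(x,a),f(y,a))\in R^{(t)}$. Because $R^{(t)}\subseteq\mathcal O_{\!D}^2$, the successors $f(x,a),f(y,a)$ automatically lie in $\mathcal O_{\!D}$, so the induction hypothesis yields $E_\phi(f(x,a))\neq E_\phi(f(y,a))$. Crucially, $a\in\mathcal A_{\cap}(x,y)$ means \emph{both} transitions $(x,a,f(x,a),p(x))$ and $(y,a,f(y,a),p(y))$ belong to $\mathcal D$, so well-trainedness supplies $E_\phi(f(x,a))=T_\psi(E_\phi(x),a)$ and $E_\phi(f(y,a))=T_\psi(E_\phi(y),a)$. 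Arguing by contradiction as in Case B: if $E_\phi(x)=E_\phi(y)$, applying the single-valued map $T_\psi(\cdot,a)$ forces $E_\phi(f(x,a))=E_\phi(f(y,a))$, contradicting the previous sentence. Hence $E_\phi(x)\neq E_\phi(y)$, closing the induction.

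The main obstacle, and really the only substantive departure from the full-data proof, is bookkeeping the gap between $\mathcal O$ and the observed source set $\mathcal O_{\!D}$, and verifying that every appeal to well-trainedness is legal. In the full-data theorem the identity $T_\psi(E_\phi(o),a)=E_\phi(f(o,a))$ holds for \emph{all} $(o,a)$; empirically it holds only on transitions recorded in $\mathcal D$. The definition of $\mathcal A_{\cap}(x,y)$ is engineered precisely so that the propagation clause of $\mathcal F_{\!D}$ fires only along actions for which both transitions are available, which is exactly what makes the two applications of well-trainedness in the inductive step valid. I would then double-check the edge cases: pairs whose successors escape $\mathcal O_{\!D}$ contribute nothing since $R^{(t)}\subseteq\mathcal O_{\!D}^2$, and the reflexive and symmetric structure inherited from $\mathcal F_{\!D}$ is harmless. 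These are routine once the co-observed restriction is respected, so I expect the proof to go through essentially as a careful finite-data restatement of Thm.~\ref{thm:main result}.
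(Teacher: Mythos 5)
Your proof is correct and follows essentially the same route as the paper's: a forward induction on the stages $R^{(t)}$ of the fixed-point iteration, with the auxiliary-disagreement clause handled exactly as in Case~A of Thm.~\ref{thm:main result} for the base case, and the co-observed action $a\in\mathcal A_{\cap}(x,y)$ licensing both applications of well-trainedness in the inductive step. Your explicit bookkeeping (handling pairs that enter $R^{(t+1)}$ via the first clause, and noting that successors lie in $\mathcal O_{\!D}$ because $R^{(t)}\subseteq\mathcal O_{\!D}^2$) is slightly more careful than the paper's write-up, which restricts to $R^{(k+1)}\setminus R^{(k)}$ and leaves these points implicit, but the argument is the same.
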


\begin{proof}
Since $(o_i,o_j)\notin B_{\!D}^{\star}$, we have $(o_i,o_j)\in R_{\!D}^{\star}$.
Let $R^{(t)}$ be the ascending sequence from Definition~\ref{def:empirical-mcb}. We prove by induction on $t$ that $(x,y)\in R_{\!D}^{\star}\Rightarrow E_\phi(x)\neq E_\phi(y)$.

Base:

We want to show $(x,y)\in R^{(1)}\Rightarrow E_\phi(x)\neq E_\phi(y)$. Note that $(x,y)\in R^{(1)}$ iff $p(x)\neq p(y)$. If $E_\phi(x)=E_\phi(y)$, the perfect-fit condition \eqref{eq:perfect-fit} implies
\(
p(x)=P_\theta(E_\phi(x))=P_\theta(E_\phi(y))=p(y),
\)
a contradiction.
Hence $E_\phi(x)\neq E_\phi(y)$.

Inductive step:

Take $(x,y)\in R^{(k+1)}\setminus R^{(k)}$.
By the successor disagreement clause, there exists $a\in\mathcal A_{\cap}(x,y)$ such that
\(
\bigl(f(x,a),f(y,a)\bigr)\in R^{(k)}.
\)
By the inductive hypothesis,
\(
E_\phi(f(x,a))\neq E_\phi(f(y,a)).
\)
Suppose $E_\phi(x)=E_\phi(y)$.
Using the definition of a well-trained model,
\[
E_\phi\bigl(f(x,a)\bigr)
= T_\psi\bigl(E_\phi(x),a\bigr)
= T_\psi\bigl(E_\phi(y),a\bigr)
= E_\phi\bigl(f(y,a)\bigr),
\]
contradicting the inductive hypothesis.
Therefore $E_\phi(x)\neq E_\phi(y)$.
\end{proof}

Observations that are not bisimilar in the ground truth can be collapsed if they appear in the dataset only as successors or if the dataset does not cover actions that show they have different transition dynamics, but this is appropriate when the data coverage is not enough. When more data is available, if it is observed that they have different auxiliary values or different transition dynamics, they will be mapped to different representations. Under the knowledge discovery interpretation, this is consistent with the Fallibilism philosophy \citep{Deutsch2011} of Theory of Knowledge: knowledge is fallible, but can be improved after more observations become available.

Observations that cannot collapse when the dataset size is small cannot be collapsed after more data is observed. The reason is that once a pair of observations is added to $R_{\!D}^{\star}$, they are not allowed to collapse, since adding more data will not shrink $R_{\!D}^{\star}$.

\section{Theory's Prediction}
\label{sec:explain-collapse}

Recall our counting environment (\S\ref{sec:results}): observations $o$ are $64\times 64$ images containing $\mathrm{num\_obj}(o)\in\{0,\dots,8\}$ objects; actions are $\mathcal A=\{\mathrm{inc},\mathrm{dec}\}$; the auxiliary function is the reward function: $r(o)=\mathbf 1\{\mathrm{num\_obj}(o)==n\}$, which is an indicator function that shows when $\mathrm{num\_obj}$ is $n\in\{0,\dots,8\}$. A shape
(triangle/disk/square/bar) and color are sampled in the beginning of an episode and held fixed. The positions of the objects are resampled after each action.
Define the $9$ subsets
\[
G_k \;:=\; \{\,o\in\mathcal O \mid \mathrm{num\_obj}(o)=k\,\},\qquad k=0,\dots,8.
\]

The dynamics of the environment can be stated using these subsets: for any $o\in G_k$,
\[
f(o,\mathrm{inc})\in G_{\min\{k+1,\,8\}},\qquad
f(o,\mathrm{dec})\in G_{\max\{k-1,\,0\}}.
\]

\begin{proposition}[$9$-way partition is a bisimulation]
\label{prop:11way-is-bisim}
Let $B_{\text{cnt}}:=\bigcup_{k=0}^{8} (G_k\times G_k)$. Then $B_{\text{cnt}}$ is a bisimulation.
\end{proposition}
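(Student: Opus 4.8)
The plan is to verify the two defining clauses of Definition~\ref{def:bisimulation} directly against the explicit dynamics of the counting environment, after first confirming that $B_{\text{cnt}}$ is genuinely an equivalence relation. Since every observation $o$ has a well-defined count $\mathrm{num\_obj}(o)\in\{0,\dots,8\}$, the sets $G_0,\dots,G_8$ form a partition of $\mathcal O$: they are pairwise disjoint and their union is all of $\mathcal O$. Consequently $B_{\text{cnt}}=\bigcup_k (G_k\times G_k)$ is exactly the relation ``$o_i$ and $o_j$ have the same count,'' which is automatically reflexive, symmetric, and transitive. So the equivalence-relation requirement is immediate and I would dispatch it in one line.

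For condition (i), I would fix $o_i\equiv_{B_{\text{cnt}}}o_j$, so that $o_i,o_j\in G_k$ for a common $k$. The auxiliary function here is the reward $r(o)=\mathbf 1\{\mathrm{num\_obj}(o)=n\}$, which depends on $o$ only through its count. Hence $r(o_i)=\mathbf 1\{k=n\}=r(o_j)$, giving reward agreement directly from the fact that $o_i$ and $o_j$ share the same block index $k$.

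For condition (ii), I would invoke the stated transition structure: for any $o\in G_k$, we have $f(o,\mathrm{inc})\in G_{\min\{k+1,8\}}$ and $f(o,\mathrm{dec})\in G_{\max\{k-1,0\}}$. Thus for $o_i,o_j\in G_k$ and either action $a$, both successors land in one and the same block $G_{k'}$, where $k'=\min\{k+1,8\}$ or $k'=\max\{k-1,0\}$ according to $a$. Therefore $(f(o_i,a),f(o_j,a))\in G_{k'}\times G_{k'}\subseteq B_{\text{cnt}}$, i.e. $f(o_i,a)\equiv_{B_{\text{cnt}}}f(o_j,a)$. Carrying this out for both $a\in\{\mathrm{inc},\mathrm{dec}\}$ completes the check, and no particular action needs special treatment.

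There is no real obstacle here; the argument is a routine verification. The only point demanding a moment of care is the boundary behavior at $k=0$ and $k=8$, where the clipping in $\min\{k+1,8\}$ and $\max\{k-1,0\}$ means that increasing at $8$ or decreasing at $0$ leaves the count unchanged. But since $o_i$ and $o_j$ start in the same block $G_k$, they are clipped identically and still land in a common block, so condition (ii) survives the boundary cases. I would note this explicitly to make the proof airtight, and conclude that $B_{\text{cnt}}$ satisfies both clauses and is therefore a bisimulation.
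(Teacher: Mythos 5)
Your proof is correct and follows essentially the same route as the paper's: fix a pair in a common block $G_k$, note the reward depends only on $k$, and observe that both actions send both observations into a single common block, so successors stay related. The paper's version is terser (it omits the explicit equivalence-relation check and the boundary-case remark, both of which are immediate), but the substance is identical.
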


\begin{proof}
Take $(x,y)\in B_{\text{cnt}}$. Then $x,y\in G_k$ for some $k$.  
Consider the rewards: $r(x)=\mathbf 1\{k=n\}=r(y)$.  
Consider the dynamics: $f(x,\mathrm{inc}),f(y,\mathrm{inc})\in G_{\min\{k+1,8\}}$, $f(x,\mathrm{dec}),f(y,\mathrm{dec})\in G_{\max\{k-1,0\}}$, hence the pairs of successors remain in $B_{\text{cnt}}$.
\end{proof}

\begin{proposition}[9-way partition is the largest]
\label{prop:no-merge-different-counts}
We prove 9-way partition is the largest bisimulation over the counting environment.
\end{proposition}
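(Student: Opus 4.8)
The plan is to establish the two inclusions $B_{\text{cnt}}\subseteq B^\star$ and $B^\star\subseteq B_{\text{cnt}}$, whose conjunction gives $B_{\text{cnt}}=B^\star$, the claimed identity. The key simplification is that everything reduces to the ``count chain'' on $\{0,\dots,8\}$: since the reward $r(o)=\mathbf 1\{\mathrm{num\_obj}(o)=n\}$ and the successor count $\mathrm{num\_obj}(f(o,a))$ depend only on $\mathrm{num\_obj}(o)$ (by the clamped dynamics recorded just above the statement), applying any fixed action sequence drives $\mathrm{num\_obj}(o)$ along a trajectory in $\{0,\dots,8\}$ determined solely by the starting count, and the reward along the way is a function of that count alone. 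I will first record the standard unrolling of the least fixed point of $\mathcal F$: by induction on $t$, $R^{(t+1)}$ consists of the pairs for which rewards already differ together with pairs admitting an action sending them into $R^{(t)}$, so $(o,o')\in R^\star$ \emph{iff} there is a finite action sequence $a_1,\dots,a_m$ (with $m=0$ allowed) after whose application the two rewards differ. Thus separation is purely a statement about count trajectories.

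For the inclusion $B_{\text{cnt}}\subseteq B^\star$, I take $(o,o')\in G_k\times G_k$ and show $(o,o')\notin R^\star$. Because any two same-count observations are sent by a given action to successors of the same count, a one-line induction shows that under any action sequence their counts remain equal at every step, hence their rewards agree at every step. By the characterization above no sequence can separate them, so $(o,o')\notin R^\star$, i.e.\ $(o,o')\in B^\star=(\mathcal O\times\mathcal O)\setminus R^\star$.

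For the reverse inclusion $B^\star\subseteq B_{\text{cnt}}$ I must show that observations of distinct counts $i\neq j$ always lie in $R^\star$, and it suffices to exhibit one action sequence whose count-level effect lands exactly one trajectory on $n$. Writing $d=|i-j|\ge 1$, I drive the pair toward $n$ from the appropriate side and halt the instant a coordinate reaches $n$: if both counts are below $n$, apply $\mathrm{inc}$ until the larger equals $n$ (the smaller is then $n-d<n$); if both are above $n$, apply $\mathrm{dec}$ until the smaller equals $n$ (the larger is then $n+d>n$); if they straddle $n$, or one already equals $n$, a single one-sided move finishes immediately. In every case the coordinate that reaches $n$ has reward $1$ while the other, still at distance $d\ge 1$ on the same side, has reward $0$, so the final rewards differ and $(o,o')\in R^\star$; equivalently $(o,o')\notin B^\star$.

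The one delicate point, and the step I expect to require the most care, is the clamping of the count dynamics at $0$ and $8$: a careless sequence can merge two distinct counts (for instance by pushing both to $8$) and destroy the separation. The construction above sidesteps this precisely because it always moves \emph{toward} $n$ and stops \emph{at} $n$, never pushing a coordinate past $n$ into the far boundary; consequently the gap $d$ is preserved throughout and the two coordinates never coincide. This also makes the extreme targets $n=0$ and $n=8$ fall out as instances of the same rule—only the ``both above'' or ``both below'' branch is ever invoked—so no separate boundary case is needed. Lifting each count-level sequence back to observations is immediate since reward and next count are functions of the current count, and combining the two inclusions yields $B_{\text{cnt}}=B^\star$, i.e.\ the $9$-way partition is the largest bisimulation.
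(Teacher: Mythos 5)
Your proof is correct and follows essentially the same two-inclusion strategy as the paper: same-count pairs never enter the stages $R^{(t)}$ (the paper does a direct induction on $t$; you package the same induction as an explicit characterization of $R^\star$ as the pairs separable by some finite action sequence), and cross-count pairs are certified to lie in $R^\star$ by an action sequence that drives exactly one coordinate onto count $n$. One difference is worth flagging, in your favor: your both-below/both-above/straddle case split is more careful than the paper's own construction. The paper fixes the direction $a^\star$ from $o$'s position relative to $n$ but runs only $t=\min\{d(o),d(o')\}$ steps and claims $o^{(t)}$ lands on $n$; as literally written this fails when $o$ is the \emph{farther} observation and the two counts lie on opposite sides of $n$ (e.g.\ $n=4$, counts $0$ and $5$: $t=1$, $a^\star=\mathrm{inc}$, giving counts $1$ and $6$ with both rewards $0$), so the paper implicitly needs the relabeling ``drive the closer one to $n$,'' which your case analysis makes explicit and which removes the bug. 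The only blemish on your side is the phrase ``a single one-sided move finishes immediately'' in the straddle case: one step does not suffice in general (counts $1$ and $6$ with $n=4$), but your governing rule—move toward $n$ and halt the instant a coordinate reaches $n$—does handle that case correctly (the lower coordinate reaches $n$ while the upper one only moves up and stays strictly above $n$), so nothing in the argument breaks.
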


\begin{proof}
Let $G_k=\{o:\#\mathrm{obj}(o)=k\}$ and $p(o)=\mathbf 1\{\#\mathrm{obj}(o)=n\}$.  
Let $R^\star$ be the least fixed point of $\mathcal F$, as defined in Def.\ref{def:most contracting bisim}, and set 
\[
R_{\neq}:=\bigcup_{k\ne \ell} (G_k\times G_\ell).
\]

\textbf{1) $R_{\neq}\subseteq R^\star$.}  
Take $(o,o')\in G_k\times G_\ell$ with $k\neq \ell$.  
Define $d(x):=|\#\mathrm{obj}(x)-n|$, and let $t=\min\{d(o),d(o')\}$.  
Choose $a^\star=\mathrm{inc}$ if $\#\mathrm{obj}(o)<n$, else $a^\star=\mathrm{dec}$.  
After $t$ steps of $a^\star$, $o^{(t)}$ is at count $n$ so $p(o^{(t)})=1$, 
while $o'^{(t)}$ is not at $n$, so $p(o'^{(t)})=0$.  
Thus $(o^{(t)},o'^{(t)})\in R^{(1)}$.  
By the successor clause, $(o,o')\in R^\star$.  
Hence all cross-count pairs lie in $R^\star$.

\textbf{2) $R^\star\subseteq R_{\neq}$.}  
We want to show that at no stage $t$ does a pair from the same $G_k$ appear in $R^{(t)}$.

Base:

$R^{(1)}$ only contains pairs with differing reward values, 
hence no pair from the same $G_k$ appears in $R^{(1)}$. 
Therefore $R^{(1)} \subseteq R_{\neq}$.

Inductive step:

If $(x,y)\in G_k\times G_k$, then for any action 
$f(x,a),f(y,a)\in G_{k'}\times G_{k'}$; by hypothesis this successor is not in $R^{(t)}$, 
so $(x,y)\notin R^{(t+1)}$.  
Hence no pair of observations with the same object count is contained in $R^\star$.

Therefore $R^\star=R_{\neq}$ and
\[
B^\star=(\mathcal O^2)\setminus R^\star
= (\mathcal O^2)\setminus R_{\neq}
= \bigcup_{k=0}^8 (G_k\times G_k)
= B_{\mathrm{cnt}}.
\]
So the 9-way partition is exactly the largest bisimulation.
\end{proof}

\begin{corollary}[Nine non-collapsible classes]
\label{cor:eleven-classes}
The largest bisimulation over the counting environment is $B_{\text{cnt}}$. The quotient $\mathcal O/B_{\text{cnt}}=\{G_0,\dots,G_{8}\}$.  
By Thm.~\ref{thm:main result}, any well-trained model cannot map two observations from different $G_k$'s to the same encoding vector.
\end{corollary}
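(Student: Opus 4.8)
The plan is to assemble the corollary directly from the two propositions just established together with the main theorem; no new technical machinery is needed. First I would record that $B_{\text{cnt}}=\bigcup_{k=0}^{8}(G_k\times G_k)$ is the largest bisimulation. Proposition~\ref{prop:11way-is-bisim} shows $B_{\text{cnt}}$ is a bisimulation, and Proposition~\ref{prop:no-merge-different-counts} shows it coincides with the least-fixed-point complement $B^\star=(\mathcal O^2)\setminus R^\star$ of Definition~\ref{def:most contracting bisim}; chaining these gives $B^\star=B_{\text{cnt}}$, which is precisely the first sentence of the corollary.

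Next I would identify the quotient. Because the object-count map $\mathrm{num\_obj}:\mathcal O\to\{0,\dots,8\}$ is a well-defined function, its level sets $G_0,\dots,G_8$ are pairwise disjoint and cover $\mathcal O$, so they form a partition. The relation $B_{\text{cnt}}=\bigcup_k(G_k\times G_k)$ places $o$ and $o'$ in relation exactly when $\mathrm{num\_obj}(o)=\mathrm{num\_obj}(o')$, which is manifestly reflexive, symmetric, and transitive; hence $B_{\text{cnt}}$ is an equivalence relation whose classes are exactly the $G_k$. Since each count $k\in\{0,\dots,8\}$ is realized by some observation in this environment, all nine classes are nonempty, yielding $\mathcal O/B_{\text{cnt}}=\{G_0,\dots,G_8\}$.

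Finally I would invoke Theorem~\ref{thm:main result}. Fix observations $o_i\in G_k$ and $o_j\in G_\ell$ with $k\neq\ell$. Since $(o_i,o_j)\notin G_m\times G_m$ for any $m$, we have $(o_i,o_j)\notin B_{\text{cnt}}=B^\star$, i.e., $o_i\not\equiv_{B^\star}o_j$. For any \model model that is well-trained in the sense of the theorem, this hypothesis gives $E_\phi(o_i)\neq E_\phi(o_j)$, so no two observations with distinct counts can share an encoding vector; equivalently, any two observations mapped to the same latent point must have the same object count.

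I do not expect a genuine obstacle here: the substantive content—proving that $B_{\text{cnt}}$ is both a bisimulation and the maximal one—was already discharged by Propositions~\ref{prop:11way-is-bisim} and~\ref{prop:no-merge-different-counts}, and the final claim is an instantiation of the main theorem. The one point demanding a line of care is confirming that the $G_k$ are \emph{exactly} the equivalence classes of $B_{\text{cnt}}$ (neither coarser nor finer), which reduces to the elementary observation that $B_{\text{cnt}}$ relates two observations if and only if they have equal object count, together with the nonemptiness of each count class.
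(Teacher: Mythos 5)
Your proposal is correct and follows essentially the same route as the paper: the corollary is a direct assembly of Propositions~\ref{prop:11way-is-bisim} and~\ref{prop:no-merge-different-counts} (giving $B^\star=B_{\text{cnt}}$) with an instantiation of Theorem~\ref{thm:main result} on cross-count pairs, which is exactly how the paper derives it. Your extra care in verifying that the $G_k$ are precisely the equivalence classes (and nonempty) is a harmless elaboration of steps the paper leaves implicit.
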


\section{Experimental Details}
\label{app:exp-details-pjepa}

\paragraph{Environment.}
We use the counting environment producing RGB observations of size \(3\times64\times64\). The action space is 1-D continuous in \([-1,1]\). The sign of the action determines whether the number of objects increases or decreases. The reward is 1 when the number of objects is 4 (success), otherwise it is 0. The environment is episodic with a one-step grace after success. This is because if the episode ends right after success, the model will never see a reward of 1. 

\paragraph{Agent and model.}

- Encoder: 69-layer ResNet-style \cite{he2016resnet} CNN mapping RGB to a 256-D latent (\texttt{latent\_dim=256}); pixel preprocessing to \([-0.5,0.5]\). The deep encoder is to encourage collapse within bisimilar classes, inspired by the “tunnel effect” of deep networks \citep{masarczyk2023tunnel}. Our theory does not guarantee collapse within bisimilar classes.

- Dynamics: 2-layer MLP on \([z_t,a_t]\) with hidden dim 512.

- Reward head: 3-layer MLP on \(z_t\) with hidden dim 512.

- Decoder: 6-layer convolutional decoder trained with MSE on normalized images \([-1,1]\); no gradient to encoder.

- Action conditioned reward head, Termination head, Q-function head, and policy prior head are also inherited from the TD-MPC2 implementation, but their gradient flow to the encoder is disabled.

Full model architecture in PyTorch-like notations:

\begin{lstlisting}
Total parameters: 14,866,965
Encoder: 12,260,192
Dynamics: 264,448
Action Conditioned Reward: 448,613
Reward: 448,101
Termination: 396,801
Policy prior: 397,314
Q-functions: 448,613
Decoder: 202,883

Encoder: ModuleDict(
  (rgb): ResNetEncoder(
    (shift): Identity()
    (pre): PixelPreprocess()
    (stem): Sequential(
      (0): Conv2d(3, 32, kernel_size=(7, 7), stride=(2, 2), padding=(3, 3), bias=False)
      (1): ReLU()
      (2): MaxPool2d(kernel_size=3, stride=2, padding=1, dilation=1, ceil_mode=False)
    )
    (stages): Sequential(
      (0): Sequential(
        (0): _BasicBlock(
          (conv1): Conv2d(32, 32, kernel_size=(3, 3), stride=(1, 1), padding=(1, 1), bias=False)
          (relu): ReLU()
          (conv2): Conv2d(32, 32, kernel_size=(3, 3), stride=(1, 1), padding=(1, 1), bias=False)
        )
        # ...[7 more _BasicBlock's]...
        )
      )
      (1): Sequential(
        (0): _BasicBlock(
          (conv1): Conv2d(32, 64, kernel_size=(3, 3), stride=(2, 2), padding=(1, 1), bias=False)
          (relu): ReLU()
          (conv2): Conv2d(64, 64, kernel_size=(3, 3), stride=(1, 1), padding=(1, 1), bias=False)
          (downsample): Conv2d(32, 64, kernel_size=(1, 1), stride=(2, 2), bias=False)
        )
        # ...[7 more _BasicBlock's]...
        )
      )
      (2): Sequential(
        (0): _BasicBlock(
          (conv1): Conv2d(64, 128, kernel_size=(3, 3), stride=(2, 2), padding=(1, 1), bias=False)
          (relu): ReLU()
          (conv2): Conv2d(128, 128, kernel_size=(3, 3), stride=(1, 1), padding=(1, 1), bias=False)
          (downsample): Conv2d(64, 128, kernel_size=(1, 1), stride=(2, 2), bias=False)
        )
        # ...[7 more _BasicBlock's]...
        )
      )
      (3): Sequential(
        (0): _BasicBlock(
          (conv1): Conv2d(128, 256, kernel_size=(3, 3), stride=(2, 2), padding=(1, 1), bias=False)
          (relu): ReLU()
          (conv2): Conv2d(256, 256, kernel_size=(3, 3), stride=(1, 1), padding=(1, 1), bias=False)
          (downsample): Conv2d(128, 256, kernel_size=(1, 1), stride=(2, 2), bias=False)
        )
        # ...[7 more _BasicBlock's]...
        )
      )
    )
    (avgpool): AdaptiveAvgPool2d(output_size=(1, 1))
    (proj): Linear(in_features=256, out_features=256, bias=True)
  )
)
Dynamics: Sequential(
  (0): NormedLinear(in_features=257, out_features=512, bias=True, act=Mish)
  (1): Linear(in_features=512, out_features=256, bias=True)
)
Action Conditioned Reward: Sequential(
  (0): NormedLinear(in_features=257, out_features=512, bias=True, act=Mish)
  (1): NormedLinear(in_features=512, out_features=512, bias=True, act=Mish)
  (2): Linear(in_features=512, out_features=101, bias=True)
)
Termination: Sequential(
  (0): NormedLinear(in_features=256, out_features=512, bias=True, act=Mish)
  (1): NormedLinear(in_features=512, out_features=512, bias=True, act=Mish)
  (2): Linear(in_features=512, out_features=1, bias=True)
)
Policy prior: Sequential(
  (0): NormedLinear(in_features=256, out_features=512, bias=True, act=Mish)
  (1): NormedLinear(in_features=512, out_features=512, bias=True, act=Mish)
  (2): Linear(in_features=512, out_features=2, bias=True)
)
Q-functions: Vectorized 1x Sequential(
  (0): NormedLinear(in_features=257, out_features=512, bias=True, dropout=0.01, act=Mish)
  (1): NormedLinear(in_features=512, out_features=512, bias=True, act=Mish)
  (2): Linear(in_features=512, out_features=101, bias=True)
)
Reward: Sequential(
  (0): NormedLinear(in_features=256, out_features=512, bias=True, act=Mish)
  (1): NormedLinear(in_features=512, out_features=512, bias=True, act=Mish)
  (2): Linear(in_features=512, out_features=101, bias=True)
)
\end{lstlisting}

\paragraph{Optimization and targets.}
We train the model using the Adam optimizer with a base learning rate of \(3\times10^{-4}\). The encoder parameters use a scaled learning rate of \(0.3\) times the base value. Reward and Q-values are transformed using the symlog function and then discretized into two-hot vectors following the TD-MPC2 implementation. Then the reward and Q heads are trained using cross entropy loss. The latent dynamics and the decoder are trained using MSE loss. For the reward-only experiment, we use a smaller learning rate of \(1\times10^{-5}\) for all components of the model because the original learning rate cannot make the model converge.

\paragraph{Data collection and training.}
During data collection, the agent selects actions uniformly at random from the interval \([-1,1]\). Each sampled action is repeated for four consecutive steps before resampling, which encourages broader exploration of the environment. Transitions are stored in a replay buffer with a capacity of \(100{,}000\), from which mini-batches of size \(256\) are drawn for training. The agent interacts with the environment for a total of \(300{,}000\) steps.

\end{document}